\documentclass[11pt]{article}

\PassOptionsToPackage{table,x11names,dvipsnames}{xcolor}
\usepackage[final]{acl}
\usepackage{times}
\usepackage{latexsym}
\usepackage[T1]{fontenc}
\usepackage[utf8]{inputenc}
\usepackage{microtype}
\usepackage{inconsolata}
\usepackage{graphicx}
\usepackage{subcaption}
\usepackage{amsmath}
\usepackage{amsfonts}
\usepackage{amssymb}
\usepackage{amsthm}
\usepackage{enumitem}
\newtheorem{proposition}{Proposition}
\usepackage{bm}
\newcommand{\mat}[1]{\bm{#1}}
\DeclareRobustCommand{\ours}{\texorpdfstring{\textsc{FraQ}}{FraQ}}
\usepackage{booktabs}
\usepackage{tabularx}
\newcolumntype{Y}{>{\raggedright\arraybackslash}X}
\newcolumntype{Z}{>{\centering\arraybackslash}X}
\usepackage{multirow}
\usepackage{placeins}
\usepackage{pifont}
\usepackage{tikz}
\usepackage[ruled,vlined]{algorithm2e}

\makeatletter
\def\input@path{{./}}
\makeatother
\graphicspath{{./}}

\title{\ours{}: Efficient Coordinate-Space Recompression for Federated Low-Rank Adaptation}

\author{
  Shenghui Li$^{1}$ and Thiemo Voigt$^{1,2}$ \\
  $^{1}$Uppsala University, Uppsala, Sweden \\
  $^{2}$Research Institutes of Sweden, Stockholm, Sweden \\
  \texttt{\{shenghui.li,thiemo.voigt\}@angstrom.uu.se}
}

\begin{document}
\maketitle

\begin{abstract}
	Federated fine-tuning with Low-Rank Adaptation (LoRA) enables
	collaborative and efficient adaptation of Large Language Models
	(LLMs) without centralizing private data. However, its two-factor parameterization introduces an aggregation mismatch across clients, as naive factor-wise averaging does not recover the average of the induced updates. Forming the exact aggregate in the full weight space and recompressing it avoids this mismatch, but decomposing the resulting full-size matrix is computationally expensive and memory-intensive. We propose \ours{}, an efficient coordinate-space recompression method for federated LoRA. Starting from stacked factors that represent the exact aggregate, \ours{} expresses the aggregate as the product of an orthonormal basis and a compact coordinate matrix. It then recovers the singular spectrum from a small Gram matrix of this coordinate representation and selects the smallest rank satisfying a prescribed energy threshold. The selected coordinate subspace is mapped back via the basis to construct the global adapter. Experiments across text classification and commonsense reasoning benchmarks demonstrate that \ours{} achieves accuracy close to that	of uncompressed baselines while substantially reducing downlink
	communication and incurring low server-side recompression overhead. 
\end{abstract}

\section{Introduction}
\label{sec:intro}

\begin{figure}[!t]
    \centering
    \includegraphics[width=1.0\columnwidth]{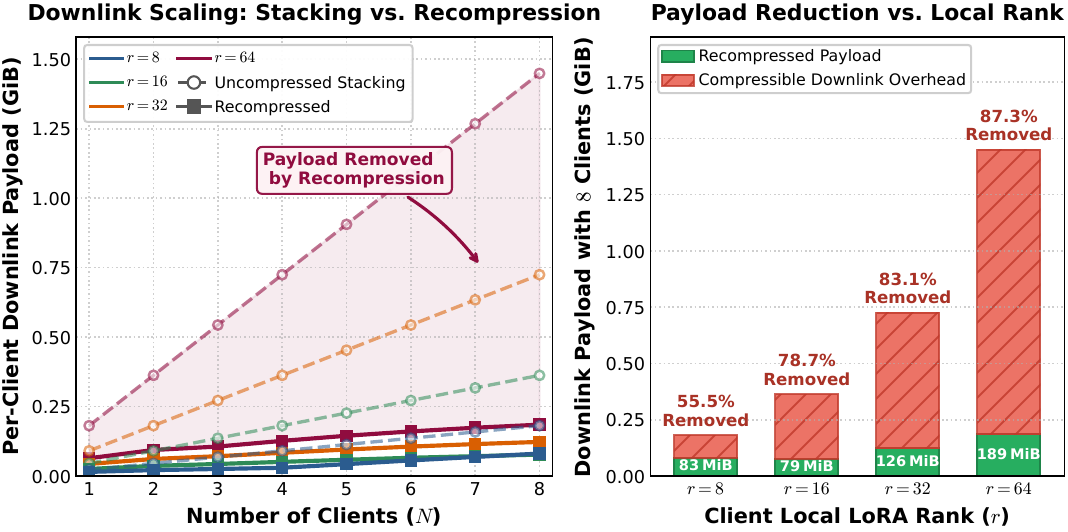}
\caption{Exact aggregation via unrecompressed stacking rapidly increases
	per-client downlink with client count and local rank. On
	LLaMA-3.2-3B, recompression with at least 90\% spectral-energy
	retention cuts this payload by 55.5--87.3\%.}
    \label{fig:downlink-motivation}
\end{figure}

Large Language Models (LLMs) have demonstrated remarkable capabilities across a wide range of tasks~\citep{brown2020language,touvron2023llama,grattafiori2024llama3herdmodels,qwen2.5}.  While fine-tuning further improves the adaptability of LLMs to downstream domains~\citep{howard-ruder-2018-universal}, fully updating their massive parameter sets remains computationally expensive and susceptible to overfitting. Parameter-Efficient Fine-Tuning (PEFT) methods alleviate these limitations by adapting LLMs with only a small number of trainable parameters, with Low-Rank Adaptation (LoRA)~\citep{hu2022lora} being a representative approach that introduces trainable low-rank updates to existing weight matrices.

When the data required for fine-tuning LLMs are distributed across institutions or edge devices, directly centralizing them is often infeasible due to privacy or regulatory constraints. Federated Learning (FL)~\citep{pmlr-v54-mcmahan17a} provides a natural paradigm for collaborative learning without sharing local data. When combined with LoRA, FL allows clients to train and communicate
only compact low-rank adapters rather than full model parameters,
substantially reducing communication overhead. %
However, LoRA also makes aggregation more subtle: conventional FedAvg-style aggregation~\cite{10447454} averages the two LoRA factor matrices independently, although the product of averaged factors generally differs from the average of the corresponding low-rank updates. This mismatch produces an inexact global update and can degrade
fine-tuning performance~\citep{sun2024improving,
	singhal-etal-2025-fedex}.

To address this mismatch, existing methods either constrain the local LoRA parameterization~\citep{sun2024improving} or move to update-level aggregation through residual merging~\citep{singhal-etal-2025-fedex} or factor stacking~\citep{wang2024florafederatedfinetuninglarge}.
The former limits local adaptation, whereas update-level methods
better preserve the aggregate without necessarily producing a
compact rank-adaptive global adapter. %
In particular, unrecompressed stacking exactly represents the aggregate, yet its adapter width grows with the sum of participating client ranks~\citep{wang2024florafederatedfinetuninglarge,singhal-etal-2025-fedex}, increasing the per-client downlink payload and local computation cost. Intuitively, the stacked width can be much larger than the number of directions needed to capture most of the aggregate's spectral energy~\citep{yan2026fedmomentumpreservingloratraining,ramesh2026florist}. The server can therefore recompress the aggregate into a lower-rank adapter before broadcasting it. As illustrated in Figure~\ref{fig:downlink-motivation}, recompression removes much of the downlink overhead while retaining most of the spectral energy.

However, conventional approaches materialize the dense aggregate in
the full weight space and apply truncated SVD~\citep{
	bai2024federated,singhal-etal-2025-fedex}, incurring substantial
server-side memory and computation. Recent compact-space variants
avoid dense reconstruction, yet still require repeated truncated
SVDs~\citep{panariello2025accurate,ramesh2026florist} or two reduced
QR factorizations followed by an SVD of the compact
core~\citep{ramesh2026spectraltransformationlayerwiseglobal}.

We propose \ours{}, a direction-adaptive, one-sided coordinate-space
recompression framework for federated LoRA. \ours{} confines
orthogonalization to the smaller weight dimension, yielding an
implicit basis and exact coordinates. It then recovers the spectrum
through a compact Gram eigenproblem, selects the global rank using an
energy threshold, and maps only the retained directions back through
the implicit basis to form an optimal rank-$p$ adapter. Our main
contributions are summarized as follows:

\begin{itemize}
	\item We establish an isometric coordinate formulation for federated LoRA recompression, proving that compact coordinates of the exact stacked aggregate preserve its nonzero singular spectrum and %
	low-rank approximation geometry. This guarantees full-space	rank-$p$ optimality after coordinate-space truncation.
    \item We design \ours{}, a direction-adaptive, one-sided algorithm that
orients the factorization along the smaller weight dimension, constructs
an implicit basis with a single reduced QR, and performs rank-adaptive
recompression in compact coordinate space. \ours{} thereby avoids materializing and decomposing the full-size aggregate while remaining equivalent to full-space recompression.
	\item Extensive experiments across multiple tasks under varying
	data heterogeneity and both uniform and heterogeneous client ranks
	show that \ours{} maintains accuracy close to that of uncompressed
	aggregation while substantially reducing downlink communication
	and incurring low server-side recompression overhead.
\end{itemize}

\section{Preliminaries}
\label{sec:prelim}

\begin{figure*}[t!]
	\centering
	\includegraphics[width=0.99\textwidth]{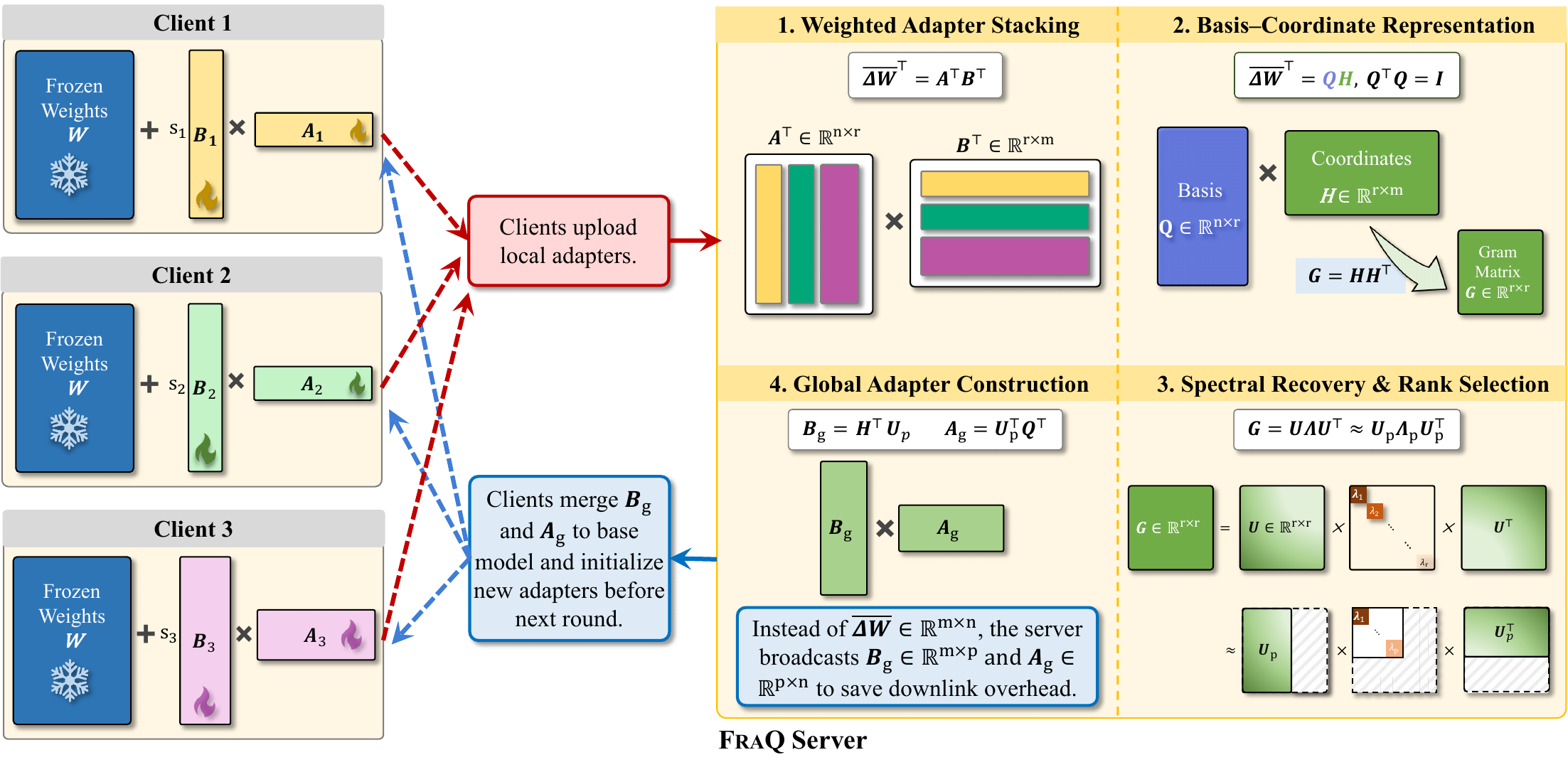}
	\caption{
		Overview of \ours{}, illustrated for the $m>n$ orientation.
		Each client initializes and trains a LoRA adapter on the current global model and uploads its factors to the server.
		The server (1) forms weighted factor stacks whose product equals the exact update-level aggregate; (2) constructs an orthonormal basis and a coordinate representation without materializing the full-size aggregate; (3) recovers the spectrum from a small Gram matrix and selects the global rank using an energy-retention threshold; and (4) combines the retained coordinate-space approximation with the basis to construct compact global LoRA factors for broadcast. Clients merge the broadcast global adapter into their cached global weights and reinitialize fresh local adapters for the next round.
		}
	\label{fig:workflow}
\end{figure*}

\paragraph{Fine-tuning with LoRA.}
LoRA~\cite{hu2022lora} represents the update of a pretrained weight matrix through a low-rank factorization. The fine-tuned weights $\mat{W}'$ are expressed as the sum of the frozen pretrained weights $\mat{W}_0$ and a low-rank update $\Delta\mat{W}$:
\begin{equation}
    \mat{W}'
    =
    \mat{W}_0+\Delta\mat{W}
    =
    \mat{W}_0+s\mat{B}\mat{A},
    \label{eq:lora}
\end{equation}
where $\mat{W}_0,\mat{W}'\in\mathbb{R}^{m\times n}$ are the pretrained and fine-tuned weights, $\mat{B}\in\mathbb{R}^{m\times\rho}$ and $\mat{A}\in\mathbb{R}^{\rho\times n}$ are the trainable low-rank factors, and $s=\alpha_{\mathrm{LoRA}}/\rho$ is the LoRA scaling factor. Thus, the effective weight update is $\Delta\mat{W}=s\mat{B}\mat{A}$. %
Instead of updating $\mat{W}_0$ directly, LoRA trains only the small factors $\mat{B}$ and $\mat{A}$, reducing the number of trainable parameters from $mn$ to $\rho(m+n)$.

\paragraph{Federated fine-tuning.}
Federated learning~\cite{pmlr-v54-mcmahan17a} trains a shared model across a set $\mathcal{C}$ of $K$ clients holding private data, coordinated by a central server over $T$ communication rounds. Under FedAvg, the server aggregates local updates by a data-weighted average; with $n_k$ samples at client $k$, $N=\sum_k n_k$, and weight $\alpha_k=n_k/N$, the global update is
\begin{equation}
    \overline{\Delta\mat{W}}
    = \sum_{k=1}^{K}\alpha_k\,\Delta\mat{W}_k ,
    \label{eq:fedavg}
\end{equation}
where $\Delta\mat{W}_k$ is the local update from client $k$. FedIT~\cite{10447454} extends this scheme to LoRA: each client $k$ fine-tunes local adapters $(\mat{B}_k,\mat{A}_k)$. FedIT assumes homogeneous LoRA configurations, so all clients use a common LoRA scale $s$. The server averages the factors independently,
\begin{equation}
    \overline{\mat{B}} = \sum_{k=1}^{K}\alpha_k\,\mat{B}_k ,
    \qquad
    \overline{\mat{A}} = \sum_{k=1}^{K}\alpha_k\,\mat{A}_k .
    \label{eq:fedit}
\end{equation}

\paragraph{Inexactness of LoRA aggregation.}
The difficulty is that aggregating adapters is not equivalent to aggregating their updates. Because each client's effective contribution is $s\mat{B}_k\mat{A}_k$ rather than the individual factors, independently averaging $\mat{B}_k$ and $\mat{A}_k$ does not recover the exact aggregated update:
\begin{equation}
    \underbrace{s\overline{\mat{B}}\,\overline{\mat{A}}}_{\text{FedIT: inexact}}
    = s\!\sum_{i,j}\alpha_i\alpha_j\,\mat{B}_i\mat{A}_j
    \!\neq\!
    \underbrace{s\sum_k\alpha_k\,\mat{B}_k\mat{A}_k}
    _{\overline{\Delta\mat{W}}\text{: exact}} .
    \label{eq:inexact}
\end{equation}
The discrepancy stems from the cross terms $\mat{B}_i\mat{A}_j$ with $i\neq j$, i.e., {the average of the products is not equal to the product of the averages}. A naive remedy is to materialize the exact aggregate $\overline{\Delta\mat{W}}\in\mathbb{R}^{m\times n}$ in Eq.~\eqref{eq:inexact} and re-decompose it with a truncated SVD, but this discards the low-rank structure that makes LoRA efficient and incurs a costly full-matrix decomposition on the server. Our method computes the rank-truncated form of the exact aggregate $\overline{\Delta\mat{W}}$ directly in a compact coordinate space, avoiding both the cross-term noise of Eq.~\eqref{eq:inexact} and the cost of full-matrix SVD.

\section{Proposed Method}
\label{sec:proposed}

We introduce \ours{}, an efficient method for consolidating client LoRA updates into a compact global adapter. As illustrated in Figure~\ref{fig:workflow}, \ours{} first forms weighted factor stacks whose product equals the exact update-level aggregate and represents this aggregate through an orthonormal basis and a compact coordinate matrix. It then recovers the spectrum from a small Gram matrix formed from the coordinates, uses an energy-retention threshold to select the broadcast rank, and maps the retained directions through the basis to construct the LoRA factors broadcast to clients.

\subsection{Problem Formulation}
\label{subsec:problem-setup}

At the beginning of round $t$, all clients initialize from the current global weights $\mat{W}_t$, which incorporate all previously merged updates into the pretrained model $\mat{W}_0$. Client $k$ then fine-tunes a rank $r_k$ low-rank adapter $(\mat{B}_k,\mat{A}_k)$, with $\mat{B}_k\in\mathbb{R}^{m\times r_k}$ and $\mat{A}_k\in\mathbb{R}^{r_k\times n}$, on top of $\mat{W}_t$, yielding the effective local update $\Delta\mat{W}_k=s_k\mat{B}_k\mat{A}_k$, where $s_k=\alpha_{\mathrm{LoRA},k}/r_k$.
For clarity, we omit round indices on adapter factors throughout
Sections~\ref{subsec:problem-setup}--\ref{sec:fraq-recompression};
Figure~\ref{fig:workflow} follows the same convention.
With aggregation weights $\alpha_k=n_k/N$, the server seeks the weighted aggregate of these updates,
\begin{equation}
    \overline{\Delta \mat{W}}
    =
    \sum_{k=1}^{K}\alpha_k \Delta\mat{W}_k
    =
    \sum_{k=1}^{K}\alpha_k s_k\mat{B}_k \mat{A}_k .
    \label{eq:exact-aggregate}
\end{equation}
Averaging $\mat{B}_k$ and $\mat{A}_k$ separately introduces the cross-term noise of Eq.~\eqref{eq:inexact}~\citep{sun2024improving}. Instead, following the exact weighted stacking scheme of FLoRA~\citep{wang2024florafederatedfinetuninglarge} and FLoRIST~\citep{ramesh2026florist}, the server constructs
\begin{align}
    \mat{B} &=
    \left[
    \sqrt{\alpha_1}\mat{B}_1,\ldots,\sqrt{\alpha_K}\mat{B}_K
    \right]
    \in\mathbb{R}^{m\times r},
    \label{eq:weighted-stacking}\\[-0.2ex]
    \mat{A} &=
    \left[
    \sqrt{\alpha_1}s_1\mat{A}_1^{\top},\ldots,
    \sqrt{\alpha_K}s_K\mat{A}_K^{\top}
    \right]^{\top}
    \in\mathbb{R}^{r\times n}.
    \notag
\end{align}
where $r=\sum_{k=1}^{K}r_k$ is the total client rank. The client-specific
LoRA scale is incorporated into the stacked $\mat{A}$ factor, matching the
implementation. Consequently,
$\mat{B}\mat{A}=\sum_k\alpha_k s_k\mat{B}_k\mat{A}_k
=\overline{\Delta\mat{W}}$. The aggregate is therefore represented
{exactly} as
\begin{equation}
    \overline{\Delta\mat{W}}=\mat{B}\mat{A},
    \qquad
    \operatorname{rank}\bigl(\overline{\Delta\mat{W}}\bigr)\leq r,
    \label{eq:stack-exact}
\end{equation}
with heterogeneous client ranks $r_k$ accommodated naturally by the concatenation. This exactness, however, comes at a cost: stacking inflates the individual rank-$r_k$ adapters into an aggregate of rank up to the {total} client rank $r$, which grows with the number of participating clients. To keep the downlink compact, the server must instead broadcast a rank-$p$ adapter $\mat{B}_g\in\mathbb{R}^{m\times p}$ and $\mat{A}_g\in\mathbb{R}^{p\times n}$, where $p\ll r$.
The problem to solve is therefore the \emph{recompression} of the exact aggregate: find the rank-$p$ adapter closest to $\overline{\Delta\mat{W}}$ in Frobenius norm, $\min_{\mat{B}_g,\mat{A}_g}\lVert \overline{\Delta\mat{W}} - \mat{B}_g\mat{A}_g \rVert_F$, or equivalently
\begin{equation}
    \min_{\operatorname{rank}(\mat{Z})\leq p}
    \bigl\lVert \overline{\Delta\mat{W}} - \mat{Z} \bigr\rVert_F .
    \label{eq:recompression}
\end{equation}

\subsection{Isometric Coordinate Representation}
For any compatible matrix product $\mat{M}=\mat{X}\mat{Y}$,
we have $\operatorname{col}(\mat{M})
\subseteq \operatorname{col}(\mat{X})$. If $\mat{X}$ has $r$ columns, then
$\operatorname{rank}(\mat{M})
\le \operatorname{rank}(\mat{X}) \le r$.
Thus, although $\mat{M}$ may be full-sized, its columns lie in a
subspace of dimension at most $r$. Representing $\mat{M}$ in an
orthonormal basis whose span contains this subspace is therefore exact.
The following proposition shows that the resulting basis--coordinate
mapping is isometric, allowing optimal low-rank recompression to be
performed equivalently in the coordinate and original matrix spaces.

\begin{proposition}[Isometric coordinate equivalence]
	\label{prop:core-equivalence}
	Let $\mat{M}=\mat{X}\mat{Y}$, where
	$\mat{X}\in\mathbb{R}^{d\times r}$ and
	$\mat{Y}\in\mathbb{R}^{r\times D}$, with $r\le d$.
	Let $\mat{Q}\in\mathbb{R}^{d\times r}$ have orthonormal columns
	whose span contains $\operatorname{col}(\mat{X})$, i.e.,
	$
	\mat{Q}^{\top}\mat{Q}=\mat{I}_r
	$ and $
	\operatorname{col}(\mat{X})
	\subseteq
	\operatorname{col}(\mat{Q}).
	$
	Define
		\begin{equation}
			\begin{aligned}
				\mat{C}
				&=\mat{Q}^{\top}\mat{X}
				\in\mathbb{R}^{r\times r},\\
				\mat{H}
				&=\mat{C}\mat{Y}
				=\mat{Q}^{\top}\mat{M}
				\in\mathbb{R}^{r\times D}.
			\end{aligned}
			\label{eq:coordinate-representation}
		\end{equation}
		Then $\mat{M}=\mat{Q}\mat{H}$. Moreover, for any
		$\widehat{\mat{H}}\in\mathbb{R}^{r\times D}$,
		\begin{equation}
			\left\|
			\mat{M}-\mat{Q}\widehat{\mat{H}}
			\right\|_F
			=
			\left\|
			\mat{H}-\widehat{\mat{H}}
			\right\|_F .
			\label{eq:coordinate-isometry}
		\end{equation}
	Thus, the columns of $\mat{H}$ are the exact coordinates of the
	corresponding columns of $\mat{M}$ in the basis $\mat{Q}$, and
	$\mat{H}$ and $\mat{M}$ have the same nonzero singular values.
	Consequently, if $\mat{H}_p$ is a best rank-$p$ approximation of
	$\mat{H}$, then
	\begin{equation}
		\mat{Q}\mat{H}_p
		\in
		\operatorname*{arg\,min}_{
			\substack{
				\widetilde{\mat{M}}\in\mathbb{R}^{d\times D}\\
				\operatorname{rank}(\widetilde{\mat{M}})\le p
			}
		}
		\left\|
		\mat{M}-\widetilde{\mat{M}}
		\right\|_F .
		\label{eq:core-equivalence}
	\end{equation}
\end{proposition}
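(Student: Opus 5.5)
The plan is to route everything through the orthogonal projector $\mat{P}=\mat{Q}\mat{Q}^{\top}$ onto $\operatorname{col}(\mat{Q})$. Since $\operatorname{col}(\mat{X})\subseteq\operatorname{col}(\mat{Q})$, we have $\mat{P}\mat{X}=\mat{X}$. Hence $\mat{Q}\mat{C}=\mat{Q}\mat{Q}^{\top}\mat{X}=\mat{X}$, and therefore $\mat{Q}\mat{H}=\mat{Q}\mat{C}\mat{Y}=\mat{X}\mat{Y}=\mat{M}$. The identity $\mat{C}\mat{Y}=\mat{Q}^{\top}\mat{M}$ is immediate by associativity. This gives the exact-representation claim, and it also shows that column $j$ of $\mat{H}$ is the coordinate vector of column $j$ of $\mat{M}$ in the basis $\mat{Q}$.

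For the isometry \eqref{eq:coordinate-isometry}, I would write $\mat{M}-\mat{Q}\widehat{\mat{H}}=\mat{Q}(\mat{H}-\widehat{\mat{H}})$. Then
\[
\|\mat{Q}\mat{E}\|_F^2=\operatorname{tr}(\mat{E}^{\top}\mat{Q}^{\top}\mat{Q}\mat{E})=\operatorname{tr}(\mat{E}^{\top}\mat{E})=\|\mat{E}\|_F^2 .
\]
For the spectra, I would compare Gram matrices: $\mat{M}^{\top}\mat{M}=\mat{H}^{\top}\mat{Q}^{\top}\mat{Q}\mat{H}=\mat{H}^{\top}\mat{H}$. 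So $\mat{M}$ and $\mat{H}$ have identical right Gram matrices, and hence the same nonzero singular values with multiplicity.

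The optimality statement \eqref{eq:core-equivalence} is the only step needing care. Membership is clear, since $\operatorname{rank}(\mat{Q}\mat{H}_p)\le\operatorname{rank}(\mat{H}_p)\le p$. For the lower bound I would not try to restrict competitors to the form $\mat{Q}\widehat{\mat{H}}$, because a general rank-$p$ $\widetilde{\mat{M}}$ need not lie in $\operatorname{col}(\mat{Q})$. Instead I would invoke Eckart--Young twice:
\[
\min_{\operatorname{rank}(\widetilde{\mat{M}})\le p}\|\mat{M}-\widetilde{\mat{M}}\|_F^2=\sum_{i>p}\sigma_i(\mat{M})^2 .
\]
By the spectral equality this equals $\sum_{i>p}\sigma_i(\mat{H})^2=\|\mat{H}-\mat{H}_p\|_F^2$, where the last equality uses that $\mat{H}_p$ is a best rank-$p$ approximation of $\mat{H}$. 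The isometry then gives $\|\mat{H}-\mat{H}_p\|_F=\|\mat{M}-\mat{Q}\mat{H}_p\|_F$, so $\mat{Q}\mat{H}_p$ attains the minimum.

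As a backup avoiding Eckart--Young on $\mat{M}$, one could use the Pythagorean split
\[
\|\mat{M}-\widetilde{\mat{M}}\|_F^2=\|\mat{H}-\mat{Q}^{\top}\widetilde{\mat{M}}\|_F^2+\|(\mat{I}-\mat{P})\widetilde{\mat{M}}\|_F^2 .
\]
This holds because $(\mat{I}-\mat{P})\mat{M}=0$. Since $\operatorname{rank}(\mat{Q}^{\top}\widetilde{\mat{M}})\le p$, the right-hand side is at least $\|\mat{H}-\mat{H}_p\|_F^2$.

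The hypothesis $r\le d$ is only needed for the orthonormal $\mat{Q}$ to exist.
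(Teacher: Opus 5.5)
Your proof is correct. Its first two steps match the paper's: using $\mat{Q}\mat{Q}^{\top}\mat{X}=\mat{X}$ to get $\mat{M}=\mat{Q}\mat{H}$, then pulling $\mat{Q}$ out of the Frobenius norm. The spectral and optimality steps go differently.

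The paper lifts a compact SVD $\mat{H}=\mat{U}\mat{\Sigma}\mat{V}^{\top}$ to the compact SVD $\mat{M}=(\mat{Q}\mat{U})\mat{\Sigma}\mat{V}^{\top}$, truncates it, and invokes Eckart--Young--Mirsky. This gives more than your Gram identity $\mat{M}^{\top}\mat{M}=\mat{H}^{\top}\mat{H}$: it also identifies the left singular vectors of $\mat{M}$ as $\mat{Q}\mat{U}$, which matches how the algorithm forms $\mat{L}_p=\mat{Q}\mat{U}_p$. Read literally, however, it only certifies the $\mat{H}_p$ obtained by truncating that particular SVD. It leaves implicit the case of an arbitrary best approximation when $\sigma_p=\sigma_{p+1}>0$.

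Your comparison of optimal values handles every best rank-$p$ approximation $\mat{H}_p$ directly: both errors equal $\sum_{i>p}\sigma_i^2$, and the isometry links them.

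Your Pythagorean backup is the most elementary route of the three. It needs neither Eckart--Young on $\mat{M}$ nor the spectral equality, only $(\mat{I}-\mat{Q}\mat{Q}^{\top})\mat{M}=\mat{0}$ and the definition of $\mat{H}_p$. It also shows explicitly why competitors with components outside $\operatorname{col}(\mat{Q})$ cannot do better, which is the point you rightly flagged as needing care.
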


Proposition~\ref{prop:core-equivalence} shows that optimal rank-$p$
recompression can be performed on $\mat{H}$ in the $r$-dimensional
coordinate space induced by $\mat{Q}$, without materializing the full
matrix $\mat{M}$. The proposition is stated under the assumption
$r\le d$, which holds in all federated LLM fine-tuning settings evaluated in this work. The extension to
$r>d$ is provided in Appendix~\ref{app:general-complexity}, and the proof of
the proposition is provided in Appendix~\ref{app:proof}.

\subsection{\ours{} Recompression}
\label{sec:fraq-recompression}

Given the weighted factor stacks in Eq.~\eqref{eq:weighted-stacking},
\ours{} instantiates Proposition~\ref{prop:core-equivalence} to recompress
the exact aggregate. Because transposition preserves singular values, rank,
and the Frobenius norm, \ours{} can operate on either the aggregate or its
transpose. We therefore orient the factorization along the smaller weight
dimension:

\begingroup
\setlength{\abovedisplayskip}{5pt plus 1pt minus 2pt}
\setlength{\belowdisplayskip}{5pt plus 1pt minus 2pt}
\setlength{\abovedisplayshortskip}{0pt plus 1pt}
\setlength{\belowdisplayshortskip}{3pt plus 1pt minus 1pt}
\begin{equation}
    (\mat{X},\mat{Y})
    =
    \begin{cases}
        (\mat{B},\mat{A}), & m\le n,\\
        (\mat{A}^{\top},\mat{B}^{\top}), & m>n.
    \end{cases}
    \label{eq:orientation}
\end{equation}
\endgroup
Accordingly, $\mat{M}=\mat{X}\mat{Y}$ equals
$\overline{\Delta\mat{W}}$ when $m\le n$ and
$\overline{\Delta\mat{W}}^{\top}$ when $m>n$. In either case,
$\mat{X}\in\mathbb{R}^{d\times r}$ and
$\mat{Y}\in\mathbb{R}^{r\times D}$, where
$d=\min(m,n)$ and $D=\max(m,n)$.

\paragraph{Basis-coordinate representation.}
To obtain the orthonormal basis required by
Proposition~\ref{prop:core-equivalence}, \ours{} factorizes the
tall-and-skinny left factor
$\mat{X}\in\mathbb{R}^{d\times r}$.
We use a reduced Householder QR factorization as the default exact
basis construction, obtaining
$\mat{R}\in\mathbb{R}^{r\times r}$ and an implicit representation of
$\mat{Q}\in\mathbb{R}^{d\times r}$ such that
$\mat{X}=\mat{Q}\mat{R}$ and
$\mat{Q}^{\top}\mat{Q}=\mat{I}_r$.
Because
$
\operatorname{col}(\mat{M})
\subseteq
\operatorname{col}(\mat{X})
\subseteq
\operatorname{col}(\mat{Q}),
$
the columns of $\mat{M}$ are represented exactly in the basis
$\mat{Q}$. Substituting $\mat{X}=\mat{Q}\mat{R}$ into
$\mat{M}=\mat{X}\mat{Y}$ and defining
$\mat{H}=\mat{R}\mat{Y}$ gives
$\mat{M}=\mat{Q}\mat{H}$.
Whereas existing approaches to LoRA merging and recompression, such as
Core Space Merging~\citep{panariello2025accurate} and
FLoRIST~\citep{ramesh2026florist}, construct left and right bases
using a pair of thin SVDs, \ours{} obtains the required basis from a
single reduced Householder QR factorization of $\mat{X}$ and incorporates
$\mat{Y}$ directly through $\mat{R}\mat{Y}$.

\paragraph{Spectral recovery and rank selection.}
Since Proposition~\ref{prop:core-equivalence} establishes that
$\mat{H}$ and $\mat{M}$ share the same nonzero singular values,
\ours{} performs spectral analysis entirely in the coordinate space.
Rather than computing an SVD of
$\mat{H}\in\mathbb{R}^{r\times D}$, it forms the compact left Gram
matrix and computes its symmetric eigendecomposition:
\begin{equation}
	\mat{G}
	=
	\mat{H}\mat{H}^{\top}
	=
	\mat{U}\mat{\Lambda}\mat{U}^{\top}
	\in\mathbb{R}^{r\times r}.
	\label{eq:gram-eigendecomposition}
\end{equation}
Here $\mat{U}^{\top}\mat{U}=\mat{I}_r$ and
$\mat{\Lambda}=\operatorname{diag}(\lambda_1,\ldots,\lambda_r)$,
with $\lambda_1\ge\cdots\ge\lambda_r\ge0$.
Because $\mat{G}=\mat{H}\mat{H}^{\top}$, the columns of $\mat{U}$
are the left singular vectors of $\mat{H}$, and
$
\lambda_i
=
\sigma_i(\mat{H})^2
=
\sigma_i(\mat{M})^2.
$
Moreover,
$\sum_{i=1}^{r}\lambda_i
=\lVert\mat{H}\rVert_F^2
=\lVert\mat{M}\rVert_F^2$,
so each $\lambda_i$ measures the contribution of the corresponding
singular direction to the aggregate's spectral energy.

For a nonzero aggregate and an energy-retention threshold $\tau\in(0,1]$, \ours{} selects the smallest rank whose leading directions retain at least a $\tau$ fraction of this energy:
\begingroup
\thinmuskip=2.5mu
\medmuskip=3.5mu plus 1mu minus 2mu
\thickmuskip=3.5mu plus 2mu minus 2mu
\begin{equation}
	p
	=
	\min\left\{
	p'\in\{1,\ldots,r\}
	\;\middle|\;
	\frac{\sum_{i=1}^{p'}\lambda_i}
	{\sum_{i=1}^{r}\lambda_i}
	\ge\tau
	\right\}.
	\label{eq:rank-selection}
\end{equation}
\endgroup

\paragraph{Global adapter construction.}
Let $\mat{U}_p=[\mat{u}_1,\ldots,\mat{u}_p]
\in\mathbb{R}^{r\times p}$ contain the eigenvectors associated with the
$p$ largest eigenvalues of $\mat{G}$. These vectors span the dominant
left singular subspace of $\mat{H}$. Orthogonally projecting $\mat{H}$
onto this subspace therefore gives its best rank-$p$ approximation $
	\mat{H}_p
	=
	\mat{U}_p\mat{U}_p^{\top}\mat{H}.
	\label{eq:core-truncation}
$
Mapping this approximation back to the original matrix space via $\mat{Q}$ yields
\begin{equation}
	\mat{Q}\mat{H}_p
	=
	(\mat{Q}\mat{U}_p)
	(\mat{U}_p^{\top}\mat{H}).
	\label{eq:lifted-truncation}
\end{equation}
We therefore define
\begin{equation}
	\mat{L}_p
	=
	\mat{Q}\mat{U}_p
	\in\mathbb{R}^{d\times p},
	\qquad
	\mat{R}_p
	=
	\mat{U}_p^{\top}\mat{H}
	\in\mathbb{R}^{p\times D}.
	\label{eq:recovered-factors}
\end{equation}
The full basis $\mat{Q}$ need not be materialized: after zero-padding
$\mat{U}_p$ to $d\times p$, $\mat{L}_p=\mat{Q}\mat{U}_p$ can be
obtained by applying the stored Householder reflectors directly.
When $m\le n$, $\mat{M}=\overline{\Delta\mat{W}}$, so these factors
already have the required LoRA orientation. When $m>n$,
$\mat{M}=\overline{\Delta\mat{W}}^{\top}$, and the factorization is
transposed to recover the original orientation. Thus,
\begin{equation}
	(\mat{B}_g,\mat{A}_g)
	=
	\begin{cases}
		(\mat{L}_p,\mat{R}_p), & m\le n,\\
		(\mat{R}_p^{\top},\mat{L}_p^{\top}), & m>n.
	\end{cases}
	\label{eq:global-factors}
\end{equation}
In both cases, $\mat{B}_g\mat{A}_g$ is a best rank-$p$
approximation of the effective weighted aggregate
$\sum_k\alpha_k s_k\mat{B}_k\mat{A}_k$ by
Proposition~\ref{prop:core-equivalence}. The resulting pair $(\mat{B}_g,\mat{A}_g)$ is therefore a merge-ready factorization as its product already includes the client-side LoRA scaling.

\paragraph{Complexity.}
In the typical regime $p\le r\le d$, \ours{} requires
$\mathcal{O}((d+D)r^2+r^3)$ time and
$\mathcal{O}((d+D)r+r^2)$ additional workspace beyond the weighted
factor stacks; this bound includes
$\mathcal{O}((d+D)rp)$ for rank-$p$ factor recovery.
The orientation in Eq.~\eqref{eq:orientation} confines reduced Householder QR to
$\mat{X}\in\mathbb{R}^{d\times r}$ and eigendecomposition to
$\mat{G}\in\mathbb{R}^{r\times r}$, while the larger dimension $D$
appears only in matrix multiplications.
We provide derivation and FLOP comparisons in
Appendix~\ref{app:flop-comparison}.

\begin{table*}[!t]

\caption{
	Overall comparison of federated LoRA methods on
	(a) text classification and (b) commonsense reasoning.
	We report accuracy (\%), downlink payload normalized to FLoRA (\%), and aggregation latency (ms; mean $\pm$ std).
	Best accuracies among compressed-downlink methods are shown in
	\textbf{bold}, and second-best accuracies are \underline{underlined}.
	FLoRA and FedEx-LoRA are excluded from this ranking because they do not
	perform rank recompression.
}

\label{performance-comparison}

\begin{subtable}{\textwidth}
\centering
\caption{Text classification.}
\label{tab:r49-acc-comm}
\footnotesize
\setlength{\tabcolsep}{2.6pt}
\renewcommand{\arraystretch}{1.18}
\resizebox{0.98\textwidth}{!}{%
\begin{tabular}{l|cc|cc|cc|cc|cc}
\toprule
\multirow{3}{*}{\textsc{Method}}
  & \multicolumn{8}{c|}{\textbf{Accuracy} }
  & \multirow{3}{*}{\shortstack{\textsc{Downlink}\\(\% FLoRA)}}
  & \multirow{3}{*}{\shortstack{\textsc{Avg. Agg.}\\\textsc{Time} (ms)}} \\
\cmidrule(lr){2-9}
  & \multicolumn{2}{c|}{\textsc{20 Newsgroups}}
  & \multicolumn{2}{c|}{\textsc{Banking77}}
  & \multicolumn{2}{c|}{\textsc{CLINC150}}
  & \multicolumn{2}{c|}{\textsc{HWU64}}
  & & \\
  & $\operatorname{Dir}(0.1)$ & $\operatorname{Dir}(0.02)$
  & $\operatorname{Dir}(0.1)$ & $\operatorname{Dir}(0.02)$
  & $\operatorname{Dir}(0.1)$ & $\operatorname{Dir}(0.02)$
  & $\operatorname{Dir}(0.1)$ & $\operatorname{Dir}(0.02)$
  & & \\
\midrule
FLoRA
  & 64.42 & 61.65
  & 88.30 & 87.22
  & 94.79 & 95.43
  & 89.99 & 89.27
  & 100.00 & $4.3 \pm 0.2$ \\
FedEx-LoRA
  & 63.02 & 61.62
  & 86.96 & 87.00
  & 93.38 & 94.35
  & 89.78 & 88.85
  & 110.00 & $11.7 \pm 0.3$ \\
\midrule
FedIT
  & \underline{65.28} & 56.45
  & 84.33 & 79.20
  & 91.85 & 89.31
  & 88.44 & 82.46
  & 10.00 & $1.9 \pm 0.1$ \\
LoRA-A$^2$
  & 63.09 & 56.63
  & 84.18 & 81.11
  & 91.23 & 86.79
  & 88.75 & 83.49
  & 10.00 & $1.4 \pm 0.1$ \\
FlexLoRA
  & \textbf{65.50} & 59.51
  & 86.42 & 82.34
  & 92.67 & 91.09
  & \underline{89.58} & 83.90
  & 10.00 & $2722.2 \pm 7.8$ \\
FLoRG
  & 61.73 & 54.51
  & 81.80 & 77.39
  & 88.79 & 84.22
  & 87.00 & 79.98
  & 1.70 & $64.4 \pm 0.4$ \\
FLoRIST
  & 62.83 & 56.28
  & 83.57 & 78.62
  & 88.89 & 84.15
  & 88.54 & 81.53
  & 4.20 & $206.1 \pm 0.7$ \\
\midrule
\rowcolor{cyan!25} \ours{} ($\tau=0.80$)
  & 63.87 & \underline{60.95}
  & \underline{87.11} & \underline{85.66}
  & \underline{94.42} & \underline{94.81}
  & \textbf{89.99} & \underline{88.65}
  & 23.40 & $14.2 \pm 0.0$ \\
\rowcolor{cyan!25} \ours{} ($\tau=0.95$)
  & 64.29 & \textbf{61.16}
  & \textbf{87.69} & \textbf{86.82}
  & \textbf{94.57} & \textbf{95.28}
  & \textbf{89.99} & \textbf{89.06}
  & 48.70 & $14.3 \pm 0.3$ \\
\bottomrule
\end{tabular}%
}
\end{subtable}

\vspace{6pt}

\begin{subtable}{\textwidth}
\centering
\caption{Reasoning benchmarks.}
\label{tab:reasoning-benchmarks}
\footnotesize
\setlength{\tabcolsep}{2.6pt}
\renewcommand{\arraystretch}{1.18}
\resizebox{0.98\textwidth}{!}{%
\begin{tabular}{l|cccccccc|c|cc}
\toprule
\textsc{Method} & \textsc{BoolQ} & \textsc{PIQA} & \textsc{SIQA}
  & \textsc{HellaSwag} & \textsc{WinoGrande} & \textsc{ARC-E}
  & \textsc{ARC-C} & \textsc{OBQA} & \textsc{Avg.}
  & \shortstack{\textsc{Downlink}\\(\% FLoRA)}
  & \shortstack{\textsc{Avg. Agg.}\\\textsc{Time} (ms)} \\
\midrule
FLoRA
  & 88.78 & 79.49 & 78.45 & 85.34
  & 77.27 & 89.73 & 78.24 & 81.60 & 82.36 & 100.00 & $16.7 \pm 0.4$ \\
FedEx-LoRA
  & 88.75 & 78.89 & 78.15 & 83.42
  & 78.30 & 89.69 & 77.22 & 82.60 & 82.13 & 112.51 & $60.0 \pm 9.5$ \\
\midrule
FedIT
  & 87.31 & 78.67 & 76.82 & 79.64
  & 73.64 & \textbf{89.86} & \underline{77.73} & 80.40 & 80.51
  & 12.51 & $9.7 \pm 0.3$ \\
LoRA-A$^2$
  & 85.84 & 71.49 & 72.67 & 55.23
  & 64.09 & 81.27 & 63.99 & 74.00 & 71.07
  & 12.51 & $5.0 \pm 0.1$ \\
FlexLoRA
  & 87.86 & \underline{79.49} & 76.41 & 81.18
  & 75.85 & \textbf{89.86} & \textbf{78.41} & 80.40 & 81.18
  & 12.51 & $197270.4 \pm 34.3$\\
FLoRG
  & 86.97 & 77.42 & 75.13 & 74.50
  & 71.19 & 87.37 & 74.32 & 79.60 & 78.31 & 4.01 & $3403.7 \pm 9.4$\\
FLoRIST
  & 85.66 & 77.48 & 73.13 & 70.25
  & 71.03 & 88.43 & 75.51 & 76.40 & 77.24
  & 12.51 & $728.0 \pm 6.7$ \\
\midrule
\rowcolor{cyan!25} \ours{} ($\tau=0.80$)
  & \textbf{88.47} & \textbf{80.03} & \underline{77.02} & \underline{82.19}
  & \underline{76.24} & 89.18 & 76.45 & \underline{82.00} & \underline{81.45}
  & 40.01 & $86.7 \pm 1.7$ \\
\rowcolor{cyan!25} \ours{} ($\tau=0.95$)
  & \underline{88.13} & 79.33 & \textbf{77.33} & \textbf{83.28}
  & \textbf{78.45} & \underline{89.56} & 77.39 & \textbf{82.80} & \textbf{82.03}
  & 67.82 & $86.7 \pm 1.8$ \\
\bottomrule
\end{tabular}%
}
\end{subtable}
\end{table*}

\subsection{Overall Pipeline}
\label{sec:federated-pipeline}

As summarized in Figure~\ref{fig:workflow}, \ours{} retains the
merge-and-reinitialize workflow of
FLoRA~\citep{wang2024florafederatedfinetuninglarge}, but broadcasts a
recompressed global adapter instead of the full stacked adapter. The rank-$p$
global adapter is stored with $\alpha_{\mathrm{LoRA},g}=p$, giving unit global
scaling $s_g=\alpha_{\mathrm{LoRA},g}/p=1$. Therefore, merging the broadcast
adapter applies exactly the merge-ready patch $\mat{B}_g\mat{A}_g$, without
an additional scaling factor. At the beginning of round $t$, each client merges the global adapter broadcast
in the preceding round into its cached global weights:
\[
\mat{W}_t
=
\mat{W}_{t-1}
+
\mat{B}_g^{(t-1)}\mat{A}_g^{(t-1)},
\!\qquad
\mat{B}_g^{(0)}\mat{A}_g^{(0)}=\mat{0}.
\]
Each client then initializes a fresh rank-$r_k$ residual adapter
$(\mat{B}_k^{(t)},\mat{A}_k^{(t)})$, fine-tunes it on top of frozen $\mat{W}_t$, and uploads the resulting factors. Reinitialization prevents the previously merged global adapter from being counted again
in the next aggregate. The server forms the exact weighted factor
stacks according to Eq.~\eqref{eq:weighted-stacking} and applies
\ours{} to produce the next compact global adapter
$(\mat{B}_g^{(t)},\mat{A}_g^{(t)})$ for broadcast.

The weighted factor stacks are algebraically equivalent to explicitly
aggregating the full-size local updates. Consequently, \ours{}
recompresses the same aggregate without requiring a common local
factor rank, naturally accommodating heterogeneous client ranks
$\{r_k\}$. The complete round-by-round procedure is provided in
Appendix~\ref{app:algorithm}.

\section{Experiments}
\label{sec:experiments}

In this section, we evaluate the performance of our \ours{} against existing federated LoRA methods across various heterogeneity and homogeneous rank settings and datasets. We assess performance based on accuracy, downlink overhead, and server-side recompression efficiency.

\begin{table*}[!t]
\centering

\caption{
	Heterogeneous-rank results on commonsense reasoning benchmarks.
	We report accuracy (\%), downlink payload normalized to FLoRA (\%), and aggregation latency (ms; mean $\pm$ std). Best accuracies among compressed-downlink methods are shown in
	\textbf{bold}, and second-best accuracies are \underline{underlined}.
}

\label{tab:hetero-rank}
\footnotesize
\setlength{\tabcolsep}{2.6pt}
\renewcommand{\arraystretch}{1.18}
\resizebox{0.98\textwidth}{!}{%
\begin{tabular}{l|cccccccc|c|cc}
\toprule
\textsc{Method} & \textsc{BoolQ} & \textsc{PIQA} & \textsc{SIQA}
  & \textsc{HellaSwag} & \textsc{WinoGrande} & \textsc{ARC-E}
  & \textsc{ARC-C} & \textsc{OBQA} & \textsc{Avg.}
  & \shortstack{\textsc{Downlink}\\(\% FLoRA)}
  & \shortstack{\textsc{Avg. Agg.}\\\textsc{Time} (ms)} \\
\midrule
FLoRA
  & 89.30 & 76.77 & 77.64 & 81.05
  & 74.27 & 88.01 & 72.18 & 78.60 & 79.73
  & 100.0 & $18.9 \pm 0.7$ \\
\midrule
FlexLoRA
  & \underline{88.62} & \underline{77.04} & \underline{75.69} & \underline{79.99}
  & 72.69 & \textbf{88.13} & 69.71 & 75.60 & 78.44
  & 27.6 & $334587.6 \pm 1576.2$ \\
FLoRIST
  & 88.47 & 76.99 & 74.92 & 79.71
  & \underline{73.01} & \underline{88.09} & \underline{71.76} & \underline{77.60} & \underline{78.82}
  & 23.0 & $4364.6 \pm 10.5$ \\
\midrule
\rowcolor{cyan!25} \ours{} ($\tau=0.80$)
  & \underline{88.62} & 76.22 & 75.08 & 78.16
  & 72.14 & 87.67 & \textbf{71.93} & 76.60 & 78.30
  & {16.2} & $369.4 \pm 5.3$ \\
\rowcolor{cyan!25} \ours{} ($\tau=0.95$)
  & \textbf{88.93} & \textbf{77.31} & \textbf{76.10} & \textbf{81.01}
  & \textbf{74.19} & 87.63 & 71.33 & \textbf{78.20} & \textbf{79.34}
  & 45.5 & $367.9 \pm 4.0$ \\
\bottomrule
\end{tabular}%
}
\end{table*}

\subsection{Experimental Setup}
\paragraph{Tasks, federated settings, and metrics.}
We evaluate \ours{} with RoBERTa-base~\citep{liu2019roberta} on four intent
and topic classification datasets: BANKING77~\citep{casanueva2020efficient},
20 Newsgroups~\citep{lang1995newsweeder},
CLINC150~\citep{larson2019evaluation}, and HWU64~\citep{liu2019benchmarking}.
Each dataset is partitioned across 10 clients using Dirichlet label splits with
$\alpha\in\{0.1,0.02\}$, representing moderate and extreme heterogeneity.
Training runs for 30 communication rounds with 200 local steps per round. For this model, we use a single linear classification head for all methods. The
classifier is jointly trained with the local LoRA adapters and aggregated using
FedAvg during the first five communication rounds, after which it is frozen.
This ensures that subsequent performance differences primarily reflect the
LoRA adaptation and aggregation mechanisms.

We fine-tune LLaMA-3.2-3B~\citep{llama32} on the
Commonsense-170K reasoning benchmark following FedEx-LoRA~\citep{singhal-etal-2025-fedex}. This setting uses eight
clients with a pathological cross-task partition, where each client holds one
reasoning task. The main evaluation uses a uniform local LoRA rank of 8. We
additionally consider heterogeneous local ranks
$(r_1,\ldots,r_8)=(4,16,32,64,4,16,32,64)$ under the same reasoning setup,
with all other training settings unchanged. Dataset details and complete
training hyperparameters are provided in Appendices~\ref{app:datasets}
and~\ref{app:hyperparameters}.

We evaluate \ours{} at energy thresholds $\tau=0.80$ and $0.95$ to
characterize the accuracy--communication trade-off. We measure per-client
downlink by the number of LoRA parameters broadcast per round, normalized to
the uncompressed FLoRA stack ($100\%$). We evaluate server-side
aggregation efficiency using wall-clock latency and peak memory, complemented
by the theoretical FLOP analysis in Appendix~\ref{app:flop-comparison}.
Latency is reported as mean $\pm$ standard deviation and includes
recompression where applicable. Federated training is simulated on a cluster of NVIDIA GH200 GPUs, whereas server-side efficiency is measured on a single GPU from this cluster under identical software conditions for all methods.  Unless otherwise noted, weight matrices with compatible shapes are processed using batched GPU kernels.

\paragraph{Baselines.}
We compare \ours{} against factor-space aggregation methods
FedIT~\citep{10447454} and LoRA-$A^2$~\citep{koo2025towards}, exact and uncompressed aggregation
methods FLoRA~\citep{wang2024florafederatedfinetuninglarge} and
FedEx-LoRA~\citep{singhal-etal-2025-fedex}, and recompression methods
FlexLoRA~\citep{bai2024federated}, FLoRG~\citep{meng2026florg}, and
FLoRIST~\citep{ramesh2026florist}. Detailed baseline descriptions are provided
in Appendix~\ref{app:baselines}. The heterogeneous-rank evaluation includes
only methods that natively support varying client ranks, namely FLoRA,
FlexLoRA, FLoRIST, and \ours{}.

\begin{figure*}[!t]
\centering
\begingroup
\definecolor{fraqblue}{HTML}{6A3D9A}%
\definecolor{floristgreen}{HTML}{1B9E77}%
\definecolor{flexorange}{HTML}{E66101}%
\small
\textcolor{fraqblue}{$\bullet$}\,\ours{}\qquad
\textcolor{floristgreen}{$\blacksquare$}\,FLoRIST\qquad
\textcolor{flexorange}{$\blacktriangle$}\,FlexLoRA\qquad
\tikz[baseline=-0.5ex]{\draw[line width=1.2pt] (0,0)--(0.38,0); \fill (0.19,0) circle (1.35pt);}\,Batched\qquad
\tikz[baseline=-0.5ex]{\draw[line width=1pt,dash pattern=on 3pt off 2pt] (0,0)--(0.62,0); \draw[fill=white,line width=0.7pt] (0.31,0) circle (1.35pt);}\,Unbatched
\par\vspace{1pt}
\endgroup
\begin{subfigure}[t]{0.493\textwidth}
  \centering
  \includegraphics[width=\linewidth]{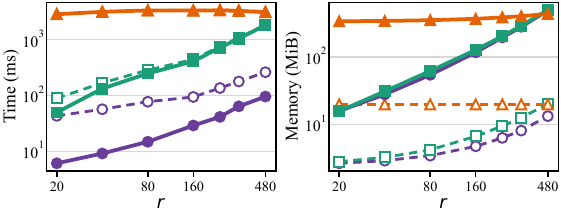}
  \caption{Scalability versus total rank $r$}
\end{subfigure}\hfill
\begin{subfigure}[t]{0.493\textwidth}
  \centering
  \includegraphics[width=\linewidth]{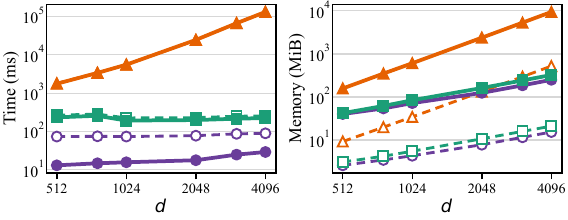}
  \caption{Scalability versus hidden size $d$}
\end{subfigure}
\caption{Server-side recompression runtime and peak memory as
	(a) the total stacked rank $r$ varies at $d=768$ and
	(b) the hidden size $d$ varies at $r=80$. Solid and dashed curves denote
	batched and unbatched execution, respectively. \ours{} consistently achieves
	the lowest runtime while maintaining low peak memory across both sweeps.}
\label{fig:agg-scalability}
\end{figure*}

\subsection{Main Results}
\label{sec:main-results}

\paragraph{Accuracy and communication.}
Table~\ref{performance-comparison} reports accuracy, normalized downlink, and
aggregation latency. Boldface and underlining mark the best and second-best
compressed results; unrecompressed FLoRA and FedEx-LoRA are excluded.
\ours{} ($\tau{=}0.95$) achieves the best compressed result on three of four
classification datasets under moderate heterogeneity and all four under
extreme heterogeneity, remaining within $0.61$ percentage points of FLoRA
across all eight settings. \ours{} ($\tau{=}0.80$) ranks second on all four
datasets under extreme heterogeneity. On commonsense reasoning, the two
variants rank first and second among compressed methods at $82.03\%$ and
$81.45\%$, respectively, with $\tau{=}0.95$ only $0.33$ points below FLoRA.

At $\tau{=}0.80$, \ours{} uses $23.4\%$ and $40.0\%$ of the FLoRA downlink
for classification and reasoning, respectively; $\tau{=}0.95$ increases these
values to $48.7\%$ and $67.8\%$ in exchange for higher accuracy. FLoRG and
FLoRIST use less downlink but incur substantially larger accuracy losses,
particularly under extreme label heterogeneity.

\paragraph{Heterogeneous client ranks.}
Table~\ref{tab:hetero-rank} evaluates methods that natively support
heterogeneous local ranks. \ours{} ($\tau{=}0.95$) achieves the highest average
accuracy among compressed methods at $79.34\%$, only $0.39$ percentage points
below FLoRA while using $45.5\%$ of its downlink. \ours{} ($\tau{=}0.80$)
reduces the downlink further to $16.2\%$, the lowest among all compared methods,
while retaining an average accuracy of $78.30\%$.

\begin{figure}[!t]
\centering
\begingroup
\definecolor{fraqblue}{HTML}{6A3D9A}%
\definecolor{floristgreen}{HTML}{1B9E77}%
\definecolor{flexorange}{HTML}{E66101}%
\scriptsize
\textcolor{fraqblue}{$\bullet$}\,\ours{}\qquad
\textcolor{floristgreen}{$\blacksquare$}\,FLoRIST\quad
\textcolor{flexorange}{$\blacktriangle$}\,FlexLoRA\quad
\tikz[baseline=-0.5ex]{\draw[line width=1pt,dash pattern=on 4pt off 2.5pt,color=gray!80!black] (0,0)--(0.55,0);}\,Optimal
\par\vspace{2pt}
\endgroup
\includegraphics[width=0.995\linewidth]{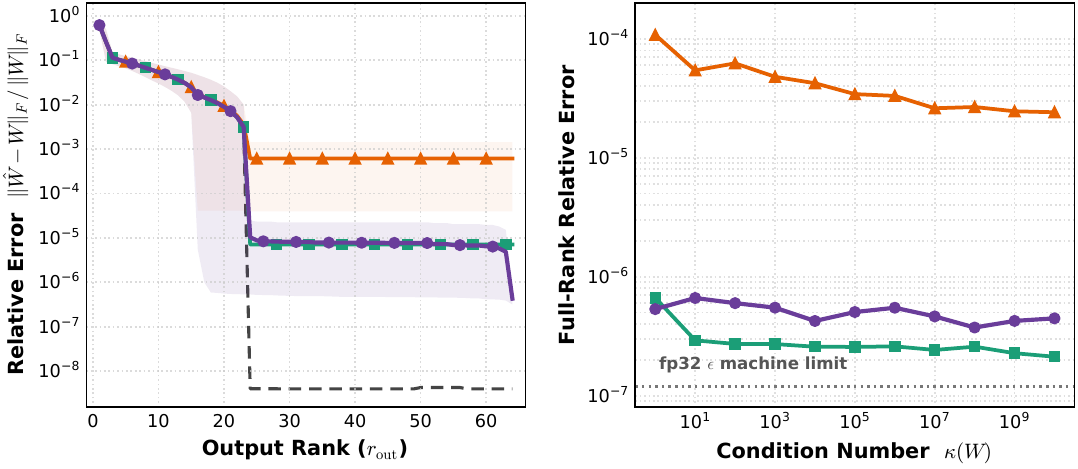}
\caption{Numerical fidelity and stability in FP32 against FP64 references.
	The left panel reports mean error over LLaMA-3.2-3B LoRA updates
	(shading: 10th--90th percentiles), while the right tests full-rank
	reconstruction under controlled condition numbers. \ours{} tracks the
	Eckart--Young optimum and remains near $10^{-7}$ through
	$\kappa(\mat{W})=10^{10}$.}
\label{fig:numerical-fidelity}
\end{figure}

\paragraph{Server-side efficiency.}
\ours{} requires 14.2~ms and 86.7~ms per aggregation for
RoBERTa-base and LLaMA-3.2-3B, respectively, making it
$8$--$15\times$ faster than FLoRIST and
$190$--$2{,}300\times$ faster than FlexLoRA. In the heterogeneous-rank
experiments, both variants require approximately 0.37~s, around
$12\times$ faster than FLoRIST and more than $900\times$ faster than
FlexLoRA.

Figure~\ref{fig:agg-scalability} further examines scalability with total
stacked rank and hidden size. Across both sweeps, \ours{} remains the fastest
method while maintaining low peak memory under batched and unbatched
execution. Unlike FlexLoRA, whose dense reconstruction causes runtime and
memory to grow rapidly with hidden size, \ours{} operates on stacked factors;
its smaller-side QR and Gram recovery also avoid the repeated rectangular SVDs
used by FLoRIST. These trends are consistent with the FLOP analysis in
Appendix~\ref{app:flop-comparison}.

\paragraph{Numerical fidelity and stability.}
Figure~\ref{fig:numerical-fidelity} evaluates all methods in FP32 with TF32
disabled against FP64 references. The left panel uses 196 LoRA modules from
one LLaMA-3.2-3B federated round with eight rank-8 clients. \ours{} closely
tracks the Eckart--Young optimum over the signal-bearing rank range and
matches the fidelity of FLoRIST. The sharp decrease in optimal error near rank
24, despite a formal stacked rank of 64, indicates substantial spectral
redundancy among client updates. The right panel isolates finite-precision
stability using rank-64 synthetic aggregates with condition numbers from $1$
to $10^{10}$. Because the output rank is also 64, the measured error contains
no intentional truncation error.

\newpage
\begin{table}[!h]
	\centering
	\scriptsize
	\setlength{\tabcolsep}{6.pt}
	\caption{Ablation study on real LLaMA-3.2-3B LoRA updates.
	Relative error is measured against the coordinate-SVD rank-8 reference.}
	\label{tab:algorithmic-ablation}
	\begin{tabular}{@{}lrrr@{}}
		\toprule
		Variant &
		Time (ms) $\downarrow$ &
		Memory (MiB) $\downarrow$ &
		Rel. error $\downarrow$ \\
		\midrule
		Adaptive + Gram
		& \textbf{14.21} & \textbf{90.13} & $2.56\times10^{-4}$ \\
		Fixed + Gram
		& 25.98 & \textbf{90.13} & $8.51\times10^{-4}$ \\
		\shortstack[l]{Adaptive +\\coordinate SVD}
		& 62.26 & 199.02 & Reference \\
		\bottomrule
\end{tabular}
\end{table}

\noindent \ours{} maintains a relative error of
approximately $10^{-7}$ throughout the sweep, close to FLoRIST and roughly
two orders of magnitude below FlexLoRA, demonstrating stable FP32
reconstruction even for severely ill-conditioned aggregates.

\paragraph{Ablation study.}
Table~\ref{tab:algorithmic-ablation} isolates the two main algorithmic choices
on 56 LLaMA-3.2-3B gate and up projections of shape
$8192\times3072$. Direction-adaptive orientation reduces latency by
$1.83\times$ without increasing peak memory, while Gram-based recovery is
$4.38\times$ faster and uses $54.7\%$ less memory than coordinate SVD.
Both Gram variants remain within $0.1\%$ of the rank-8 reference.

\FloatBarrier

\section{Conclusion}
\label{sec:conclusion}

We presented \ours{}, a coordinate-space recompression framework for federated LoRA aggregation. Starting from weighted stacks that exactly represent the aggregated update, \ours{} uses a reduced QR factorization and a compact Gram eigenproblem to recover its spectrum without materializing the dense update or performing a full-matrix SVD. Experiments across diverse federated fine-tuning settings demonstrate that \ours{} achieves a favorable trade-off among model accuracy, downlink communication, and server-side aggregation latency, making federated fine-tuning more practical in heterogeneous and resource-constrained environments.

\section*{Limitations}
We note two limitations of \ours{}. First, the cost of coordinate-space aggregation scales with the total stacked rank of the participating clients in each round, that is, the sum of their local LoRA ranks. When a large cohort of clients participates per round, this total rank grows accordingly, and the reduced QR factorization, the Gram-matrix construction, and the eigendecomposition all become more expensive. Consequently, the server-side recompression efficiency degrades as the number of participating clients increases, and the advantage of \ours{} over full-weight SVD shrinks in the regime of very large per-round participation. A detailed cost breakdown is given in the appendix.

Second, like FLoRA, \ours{} follows a merge-and-reinitialize scheme: in every round the consolidated update is baked into the global model and clients reinitialize fresh adapters, so local adapters are not warm-started from the previous round as in PEFT-style methods that reuse the downloaded adapter. This can slow per-adapter convergence relative to methods that keep refining a persistent adapter. Importantly, however, the coordinate-space recompression at the heart of \ours{} is not tied to the merge-and-reinitialize paradigm. The same QR-and-Gram construction could potentially replace SVD-based recompression in non-merge-and-reinitialize methods such as \texttt{FLoRIST} and FLoRG, but validating the resulting accuracy and runtime is left to future work.

\bibliography{references}

\appendix
\section{Theoretical Guarantees and Complexity Analysis}
\label{app:theory}

\subsection{Proof of Proposition~\ref{prop:core-equivalence}}
\label{app:proof}

\begin{proof}
Because $\operatorname{col}(\mat{X})\subseteq\operatorname{col}(\mat{Q})$,
the matrix $\mat{Q}\mat{Q}^{\top}$ acts as the identity on $\mat{X}$:
\begin{equation}
    \mat{Q}\mat{Q}^{\top}\mat{X}=\mat{X}.
\end{equation}
Therefore,
\begin{equation}
    \mat{Q}\mat{H}
    =\mat{Q}\mat{Q}^{\top}\mat{X}\mat{Y}
    =\mat{X}\mat{Y}
    =\mat{M}.
\end{equation}

Moreover, for any
$\widehat{\mat{H}}\in\mathbb{R}^{r\times D}$,
\begin{align*}
\left\|\mat{M}-\mat{Q}\widehat{\mat{H}}\right\|_F
&=
\left\|\mat{Q}
\left(\mat{H}-\widehat{\mat{H}}\right)\right\|_F \\
&=
\left\|\mat{H}-\widehat{\mat{H}}\right\|_F,
\end{align*}
where the last equality follows from
$\mat{Q}^{\top}\mat{Q}=\mat{I}_r$. This proves
Eq.~\eqref{eq:coordinate-isometry}.

Let $\mat{H}=\mat{U}\mat{\Sigma}\mat{V}^{\top}$ be a compact SVD. Since both
$\mat{Q}$ and $\mat{U}$ have orthonormal columns,
\begin{equation}
    (\mat{Q}\mat{U})^{\top}(\mat{Q}\mat{U})
    =\mat{U}^{\top}\mat{Q}^{\top}\mat{Q}\mat{U}
    =\mat{I}.
\end{equation}
Therefore
$\mat{M}=(\mat{Q}\mat{U})\mat{\Sigma}\mat{V}^{\top}$ is a compact SVD of
$\mat{M}$. Thus, $\mat{H}$ and $\mat{M}$ share the same nonzero singular
values.

Truncating this SVD to its leading $p$ components and applying the
Eckart--Young--Mirsky theorem~\cite{golub1987generalization} proves
Eq.~\eqref{eq:core-equivalence}.
\end{proof}

\subsection{General-Rank Formulation and Asymptotic Complexity}
\label{app:general-complexity}

For completeness, we first analyze the general case in which the total stacked
rank may exceed the smaller weight dimension. Let
\begin{equation}
    \begin{aligned}
        d&=\min(m,n), & D&=\max(m,n),\\
        r&=\sum_{k=1}^{K}r_k, & s&=\min(d,r).
    \end{aligned}
\end{equation}
After orientation, $\mat{X}\in\mathbb{R}^{d\times r}$ and
$\mat{Y}\in\mathbb{R}^{r\times D}$. A reduced Householder QR
factorization obtains $\mat{R}\in\mathbb{R}^{s\times r}$ and an implicit
representation of $\mat{Q}\in\mathbb{R}^{d\times s}$, followed by
$\mat{H}=\mat{R}\mat{Y}\in\mathbb{R}^{s\times D}$ and
$\mat{G}=\mat{H}\mat{H}^{\top}\in\mathbb{R}^{s\times s}$. The per-matrix
costs are
\begin{align}
    \text{reduced QR:}\quad
    &\mathcal{O}(drs),\\
    \text{coordinate construction:}\quad
    &\mathcal{O}(srD),\\
    \text{Gram construction:}\quad
    &\mathcal{O}(s^2D),\\
    \text{eigendecomposition:}\quad
    &\mathcal{O}(s^3),\\
    \text{patch recovery:}\quad
    &\mathcal{O}\!\left(ps(d+D)\right).
\end{align}
Thus, the formulation remains valid when $r>d$, in which case $s=d$. In the
typical regime $r\le d$, we have $s=r$, and the total time reduces to
$\mathcal{O}(dr^2+r^2D+r^3)$, typically dominated by the coordinate and Gram
constructions.

The direction-adaptive orientation ensures that the Householder representation
of $\mat{Q}$ is computed only from the factor associated with the smaller
dimension $d$. If the factorization were
always anchored on $\mat{B}$, QR would instead cost $\mathcal{O}(Dr^2)$ when
$m>n$, compared with \ours{}'s $\mathcal{O}(dr^2)$, a factor-of-$D/d$ reduction.
The additional workspace for the Householder representation, $\mat{R}$,
$\mat{H}$, and $\mat{G}$ is
$\mathcal{O}\!\left(s(d+D+r)+s^2\right)$, which reduces to
$\mathcal{O}((d+D)r+r^2)$ in the typical regime. These bounds exclude the
resident stacked factors representing the exact aggregate. Direct dense
recompression additionally materializes the $d\times D$ update, costing
$\mathcal{O}(dDr)$ computation and $\mathcal{O}(dD)$ workspace before its
matrix decomposition.

For $L$ adapted weight matrices, the total server cost is the sum of the
per-matrix costs,
\begin{equation}
    F_{\mathrm{total}}
    =\sum_{\ell=1}^{L}
    F\!\left(d_\ell,D_\ell,r_\ell,p_\ell\right),
    \label{eq:multilayer-cost}
\end{equation}
which accounts for the different shapes and selected ranks of transformer
projections.

\subsection{Detailed FLOP Comparison}
\label{app:flop-comparison}

We next provide a leading-order arithmetic comparison among \ours{}, FLoRIST,
and FlexLoRA. Let $p\le r$ denote the retained global rank. We focus on the
typical LoRA regime
\begin{equation}
    p\le r\ll d\le D.
\end{equation}

For a coarse leading-order comparison, we use the following standard
arithmetic estimates:
\begin{align}
    F_{\mathrm{SVD}}(a,r)
    &\approx 4ar^2+\mathcal{O}(r^3),
    \label{eq:flop-thin-svd}\\
    F_{\mathrm{QR}}(a,r)
    &\approx 2ar^2-\frac{2}{3}r^3,
    \label{eq:flop-qr}\\
    F_{\mathrm{GEMM}}(a,b,c)
    &\approx 2abc.
    \label{eq:flop-gemm}
\end{align}
Here, Eq.~\eqref{eq:flop-thin-svd} approximates a thin SVD of an $a\times r$
matrix, Eq.~\eqref{eq:flop-qr} approximates reduced Householder QR, and
Eq.~\eqref{eq:flop-gemm} approximates multiplication of an $a\times b$ matrix
by a $b\times c$ matrix. These estimates are intended only to expose the
leading dimension-dependent terms; exact constants depend on the numerical
routine, whether singular vectors are explicitly formed, and the underlying
hardware implementation.

\paragraph{FlexLoRA.}
FlexLoRA explicitly materializes the dense aggregate
$\overline{\Delta\mat{W}}\in\mathbb{R}^{d\times D}$ before decomposition.
Even under a favorable implementation that forms the aggregate through one
multiplication of the weighted stacked factors, dense materialization costs
approximately $2dDr$ FLOPs. An economy SVD of the resulting $d\times D$
matrix, with $d\le D$, has a leading cost of approximately
$4Dd^2+\mathcal{O}(d^3)$. Recovering the top-$p$ factors requires only
lower-order scaling and slicing operations. We therefore obtain
\begin{equation}
    F_{\mathrm{FlexLoRA}}
    \approx 2dDr+4Dd^2
    +\mathcal{O}\!\left(d^3+(d+D)p\right).
    \label{eq:flops-flexlora}
\end{equation}
This estimate is conservative in favor of FlexLoRA because it omits any
additional accumulation cost incurred when client updates are materialized
separately.

\paragraph{FLoRIST.}
FLoRIST first applies thin SVDs to the two stacked factors, whose dimensions
are $d\times r$ and $D\times r$ after orienting them consistently. Their
leading cost is
\begin{equation}
    4dr^2+4Dr^2=4(d+D)r^2.
\end{equation}
FLoRIST then forms and decomposes an $r\times r$ intermediate matrix,
contributing an additional $\mathcal{O}(r^3)$ term. Recovering rank-$p$ global
factors requires multiplying the rectangular singular-vector bases by the
retained intermediate factors, contributing approximately $2(d+D)rp$ FLOPs.
Its total leading cost is therefore
\begin{equation}
    F_{\mathrm{FLoRIST}}
    \approx 4(d+D)r^2+2(d+D)rp+\mathcal{O}(r^3).
    \label{eq:flops-florist}
\end{equation}

\paragraph{\ours{}.}
\ours{} performs reduced QR only on the factor anchored along the smaller
weight dimension. The QR factorization of the $d\times r$ matrix costs
approximately $2dr^2-\frac{2}{3}r^3$ FLOPs. Constructing the coordinate matrix
\begin{equation}
    \mat{H}=\mat{R}\mat{Y},
    \qquad
    \mat{R}\in\mathbb{R}^{r\times r},
    \quad
    \mat{Y}\in\mathbb{R}^{r\times D},
\end{equation}
costs approximately $2Dr^2$ FLOPs. Forming the symmetric Gram matrix
$\mat{G}=\mat{H}\mat{H}^{\top}$ with a symmetry-aware rank-$k$ update costs
approximately $Dr^2$ FLOPs; a generic matrix multiplication that explicitly
computes both triangular halves would instead cost approximately $2Dr^2$.

The symmetric eigendecomposition of
$\mat{G}\in\mathbb{R}^{r\times r}$ contributes $\mathcal{O}(r^3)$. Finally,
recovering the rank-$p$ merge factors requires applying the stored
Householder reflectors to $\mat{U}_p$ and computing
$\mat{U}_p^{\top}\mat{H}$, with a combined cost of
$\mathcal{O}((d+D)rp)$. Thus, when the Gram matrix is formed using symmetry,
\ours{} has the leading cost
\begin{equation}
    F_{\mathrm{FRAQ}}
    \approx (2d+3D)r^2
    +\mathcal{O}\!\left((d+D)rp+r^3\right).
    \label{eq:flops-fraq}
\end{equation}
If the full Gram matrix is formed with generic GEMM, the coefficient $3D$ in
Eq.~\eqref{eq:flops-fraq} becomes $4D$.

\begin{table*}[t]
\centering
\scriptsize
\setlength{\tabcolsep}{3pt}
\renewcommand{\arraystretch}{1.25}
\caption{Leading-order server-side FLOP comparison for one adapted weight
matrix. We assume $p\le r\ll d\le D$, where $d=\min(m,n)$,
$D=\max(m,n)$, and $r=\sum_k r_k$. The \ours{} Gram matrix is assumed to be
formed with a symmetry-aware rank-$k$ update. The displayed FLOPs are
approximate leading-order quantities rather than exact operation counts.}
\label{tab:flop-comparison}
\resizebox{\textwidth}{!}{%
\begin{tabular}{lccc}
\toprule
\textbf{Method} & \textbf{Matrix factorizations} &
\textbf{Leading-order FLOPs} & \textbf{Dominant scaling} \\
\midrule
FlexLoRA & $\operatorname{SVD}(d\times D)$ &
$2dDr+4Dd^2+\mathcal{O}\!\left(d^3+(d+D)p\right)$ &
$\Theta(Dd^2+dDr)$ \\[2mm]
FLoRIST & $\operatorname{SVD}(d\times r)+\operatorname{SVD}(D\times r)
+\operatorname{SVD}(r\times r)$ &
$4(d+D)r^2+2(d+D)rp+\mathcal{O}(r^3)$ &
$\Theta(Dr^2+r^3)$ \\[2mm]
\ours{} & $\operatorname{QR}(d\times r)+\operatorname{EVD}(r\times r)$ &
$(2d+3D)r^2+\mathcal{O}\!\left((d+D)rp+r^3\right)$ &
$\Theta(Dr^2+r^3)$ \\
\bottomrule
\end{tabular}%
}
\end{table*}

\paragraph{Comparison with FLoRIST.}
Equations~\eqref{eq:flops-florist} and~\eqref{eq:flops-fraq} show that \ours{}
and FLoRIST share the same coarse asymptotic order,
$\Theta(Dr^2+r^3)$, but \ours{} performs substantially less
decomposition-heavy computation. FLoRIST applies thin SVDs to both rectangular
factors, whereas \ours{} applies reduced QR only to the factor associated with
the smaller weight dimension. The rectangular decomposition FLOPs are
\begin{align}
    F_{\mathrm{dec}}^{\mathrm{FLoRIST}}
    &\approx 4(d+D)r^2,\\
    F_{\mathrm{dec}}^{\mathrm{FRAQ}}
    &\approx 2dr^2.
\end{align}
Consequently, \ours{} reduces the dimension-dependent decomposition workload
by approximately
\begin{equation}
    \frac{F_{\mathrm{dec}}^{\mathrm{FLoRIST}}}
         {F_{\mathrm{dec}}^{\mathrm{FRAQ}}}
    \approx 2\left(1+\frac{D}{d}\right).
    \label{eq:decomposition-reduction}
\end{equation}
For square matrices, Eq.~\eqref{eq:decomposition-reduction} corresponds to an
approximately $4\times$ reduction in rectangular decomposition FLOPs. The
reduction increases for highly rectangular projection matrices.

Considering all leading $r^2$ terms and neglecting the common rank-$p$
reconstruction term, the arithmetic ratio is approximately
\begin{equation}
    \frac{F_{\mathrm{FLoRIST}}}{F_{\mathrm{FRAQ}}}
    \approx \frac{4(d+D)}{2d+3D}.
    \label{eq:florist-fraq-ratio}
\end{equation}
For $D\ge d$, this ratio ranges from approximately $1.33\times$ to
$1.60\times$. The larger empirical wall-clock gains arise because \ours{}
handles the dependence on $D$ through GEMM/SYRK kernels rather than additional
SVDs, which generally have higher synchronization, workspace, and
kernel-launch overhead.

\paragraph{Comparison with FlexLoRA.}
The advantage over FlexLoRA is also asymptotic. FlexLoRA's dense SVD contributes
the dominant term $\Theta(Dd^2)$, whereas \ours{} operates on the stacked rank
and requires $\Theta(Dr^2)$ work in the typical regime. Because $r\ll d$,
\ours{} replaces quadratic dependence on the smaller model dimension $d$ with
quadratic dependence on the stacked rank $r$. Let
$\gamma=D/d\ge 1$. Neglecting the lower-order dense-materialization,
rank-$p$ reconstruction, and cubic terms, the approximate FLOP ratio is
\begin{equation}
    \frac{F_{\mathrm{FlexLoRA}}}{F_{\mathrm{FRAQ}}}
    \approx
    \frac{4\gamma}{2+3\gamma}\left(\frac{d}{r}\right)^2.
    \label{eq:flexlora-fraq-ratio}
\end{equation}
Thus, the arithmetic advantage of \ours{} over FlexLoRA grows quadratically
with the dimension-to-rank ratio $d/r$, consistent with the scalability
results in Figure~\ref{fig:agg-scalability}.

\FloatBarrier

\section{Complete \ours{} Procedure}
\label{app:algorithm}

Algorithm~\ref{alg:fraq} provides the complete round-by-round procedure for
\ours{}, including client-side merge-and-reinitialization and server-side
recompression. Reinitializing the residual adapters after merging the previous
global adapter prevents their old factorization from being counted again against the
updated global base. Although our storage convention gives unit global scaling,
Algorithm~\ref{alg:fraq} writes $s_g$ explicitly in each merge operation.

\begin{algorithm}[!ht]
\small
\caption{End-to-end federated training with \ours{}.}
\label{alg:fraq}
\DontPrintSemicolon
\KwIn{Pretrained weights $\mat{W}_0$, rounds $T$, clients $\mathcal{C}$ with ranks $\{r_k\}$, LoRA scales $\{s_k\}$, dataset sizes $\{n_k\}$, energy threshold $\tau\in(0,1]$}
\KwOut{Merged global model $\mat{W}_{T+1}$}
$\mat{W}_1 \gets \mat{W}_0$ \;
$s_g^{(0)}\mat{B}_g^{(0)}\mat{A}_g^{(0)}\gets\mat{0}$ \;
\For{$t=1$ \KwTo $T$}{
    \textcolor{blue}{\textbf{Server:}} broadcast the round-$(t{-}1)$ global adapter $(\mat{B}_g^{(t-1)},\mat{A}_g^{(t-1)},s_g^{(t-1)})$ to all clients (empty when $t=1$) \;
    \ForEach(\textbf{in parallel}){$k \in \mathcal{C}$}{
        Merge the received global adapter: $\mat{W}_t \gets \mat{W}_{t-1}+s_g^{(t-1)}\mat{B}_g^{(t-1)}\mat{A}_g^{(t-1)}$ \;
        Initialize a fresh residual adapter $(\mat{B}_k,\mat{A}_k)$ of rank $r_k$ \;
        $(\mat{B}_k,\mat{A}_k) \gets \texttt{LocalUpdate}(\mat{W}_t,\mat{B}_k,\mat{A}_k)$ \;
        \texttt{Upload}$(\mat{B}_k,\mat{A}_k)$ to server \;
    }
    \textcolor{blue}{\textbf{Server:}} form scaled and weighted stacks \;
    $\mat{B}\gets[\sqrt{\alpha_1}\mat{B}_1,\ldots,\sqrt{\alpha_K}\mat{B}_K]$ \;
    $\mat{A}\gets[\sqrt{\alpha_1}s_1\mat{A}_1;\ldots;\sqrt{\alpha_K}s_K\mat{A}_K]$ \;
    Orient: $(\mat{X},\mat{Y})\gets(\mat{B},\mat{A})$ if $m\leq n$; else $(\mat{X},\mat{Y})\gets(\mat{A}^{\top},\mat{B}^{\top})$ \;
    $s\gets\min(d,r)$ \;
    Compute Householder QR of $\mat{X}$, storing reflectors and $\mat{R}$ \;
    Form $\mat{H}\gets\mat{R}\mat{Y}$ and $\mat{G}\gets\mat{H}\mat{H}^{\top}$ \;
    Eigendecompose $\mat{G}=\mat{U}\mat{\Lambda}\mat{U}^{\top}$ and order the eigenpairs by decreasing $\lambda_i$ \;
    Select the smallest $p\in\{1,\ldots,s\}$ such that $\sum_{i=1}^{p}\lambda_i/\sum_{i=1}^{s}\lambda_i\geq\tau$; retain $\mat{U}_p$ \;
    $\mat{L}_p\gets\operatorname{ApplyQ}(\text{reflectors},\mat{U}_p)$ and $\mat{R}_p\gets\mat{U}_p^{\top}\mat{H}$ \;
    $(\mat{B}_g^{(t)},\mat{A}_g^{(t)})\gets(\mat{L}_p,\mat{R}_p)$ if $m\leq n$; else $(\mat{B}_g^{(t)},\mat{A}_g^{(t)})\gets(\mat{R}_p^{\top},\mat{L}_p^{\top})$ \;
    Set $\alpha_{\mathrm{LoRA},g}^{(t)}\gets p$ and $s_g^{(t)}\gets\alpha_{\mathrm{LoRA},g}^{(t)}/p=1$; store the rank-$p$ global adapter \;
    $\mat{W}_{t+1}\gets\mat{W}_t+s_g^{(t)}\mat{B}_g^{(t)}\mat{A}_g^{(t)}$ \;
}
\KwRet{$\mat{W}_{T+1}$}
\end{algorithm}
\FloatBarrier

\section{Experimental Details}
\label{app:experiments}

\subsection{Datasets and Data Partitions}
\label{app:datasets}

This appendix expands on the datasets used in our two evaluation settings: the
federated intent/topic classification benchmark (Table~\ref{tab:r49-acc-comm})
and the commonsense reasoning benchmark used in the additional evaluation
(Table~\ref{tab:reasoning-benchmarks}).

\paragraph{Intent and topic classification.}
\label{appendix:datasets-classification}
For the federated RoBERTa-base experiments we use four English text
classification datasets:
\begin{itemize}
  \item \textbf{BANKING77}~\citep{casanueva2020efficient}: a fine-grained
    single-domain intent detection dataset of online-banking customer queries
    spanning 77 intents.
  \item \textbf{CLINC150}~\citep{larson2019evaluation}: a multi-domain intent
    classification benchmark covering 150 intents across 10 domains, designed to
    also include out-of-scope queries.
  \item \textbf{HWU64}~\citep{liu2019benchmarking}: a multi-domain
    home-assistant intent dataset with 64 intents across 21 domains.
  \item \textbf{20 Newsgroups}~\citep{lang1995newsweeder}: a classic topic
    classification benchmark of newsgroup posts grouped into 20 categories.
\end{itemize}
To emulate realistic client data heterogeneity, each dataset is partitioned
across clients with a Dirichlet distribution $\operatorname{Dir}(\alpha)$ over
the label space, where a smaller concentration parameter $\alpha$ yields a more
skewed (non-IID) partition. We report results under a moderate heterogeneity level
$\operatorname{Dir}(0.1)$ and an extreme level $\operatorname{Dir}(0.02)$, and
evaluate global test accuracy at the final communication round.

\paragraph{Commonsense reasoning.}
\label{appendix:datasets-reasoning}
For the additional evaluation in Table~\ref{tab:reasoning-benchmarks}, we adopt
the Commonsense-170K training and evaluation protocol used by
FedEx-LoRA~\citep{singhal-etal-2025-fedex}, following
\citet{hu2023llmadapters}:
models are fine-tuned on the Commonsense-170K instruction-tuning corpus and
evaluated on eight multiple-choice reasoning tasks. Each task is cast as a
multiple-choice problem in which the model selects the correct answer from the
provided options, and we report per-task accuracy together with the eight-task
average. The benchmarks are:
\begin{itemize}
  \item \textbf{BoolQ}~\citep{clark2019boolq}: naturally occurring yes/no
    questions, each paired with a Wikipedia passage that provides the context.
  \item \textbf{PIQA}~\citep{bisk2020piqa}: physical commonsense questions that
    require choosing the more sensible of two candidate solutions to an everyday
    physical goal.
  \item \textbf{SIQA} (Social IQa)~\citep{sap2019socialiqa}: questions probing
    reasoning about people's social interactions and their likely motivations
    and consequences.
  \item \textbf{HellaSwag}~\citep{zellers2019hellaswag}: sentence-completion
    questions in which the model selects the most plausible continuation of a
    given context.
  \item \textbf{WinoGrande}~\citep{sakaguchi2021winogrande}: a large-scale
    adversarial Winograd schema benchmark requiring binary pronoun/coreference
    resolution.
  \item \textbf{ARC-Easy (ARC-e)} and \textbf{ARC-Challenge
    (ARC-c)}~\citep{clark2018arc}: grade-school science questions split into an
    easier subset and a harder subset of questions that simple retrieval methods
    answer incorrectly.
  \item \textbf{OpenBookQA (OBQA)}~\citep{mihaylov2018openbookqa}: elementary
    science questions that require combining a small ``open book'' of core
    science facts with broad commonsense knowledge.
\end{itemize}
Because our reasoning evaluation follows the same benchmark protocol as
FedEx-LoRA, the results in
Table~\ref{tab:reasoning-benchmarks} are directly comparable to theirs.

\subsection{Baseline Methods}
\label{app:baselines}

We compare against all methods reported in Table~\ref{performance-comparison}.
\textbf{FedIT}~\citep{10447454} applies FedAvg independently to the two LoRA
factors, while \textbf{LoRA-$A^2$}~\citep{koo2025towards} alternates which
LoRA factor is frozen and adaptively selects the rank to improve robustness
under heterogeneous data. \textbf{FLoRA}~\citep{wang2024florafederatedfinetuninglarge}
represents the exact weighted update by stacking the client factors and
broadcasts the resulting high-rank adapter. \textbf{FedEx-LoRA}~\citep{singhal-etal-2025-fedex}
provides an alternative exact aggregation method for federated LoRA.
\textbf{FlexLoRA}~\citep{bai2024federated} materializes the dense aggregate and
applies truncated SVD to construct compact global adapters. \textbf{FLoRG}~\citep{meng2026florg}
aggregates low-rank Gram matrices and uses Procrustes alignment, whereas
\textbf{FLoRIST}~\citep{ramesh2026florist} recompresses the stacked factors
through rectangular SVDs and a small intermediate decomposition.

\begin{table*}[t]
  \centering
  \small
  \caption{Hyperparameter details for the RoBERTa classification and
  LLaMA reasoning experiments. Table~\ref{performance-comparison} uses a
  uniform local LoRA rank of $8$; the heterogeneous-rank configuration in
  Table~\ref{tab:hetero-rank} is specified in a separate row.}
  \label{tab:hyperparameters}
  \begin{tabular}{lcc}
    \toprule
    \textbf{Hyperparameter}
      & \textbf{RoBERTa-base}
      & \textbf{LLaMA-3.2-3B} \\
    \midrule
    Training datasets
      & \shortstack{BANKING77, 20Newsgroups,\\CLINC150, HWU64}
      & COMMONSENSE170K \\
    Optimizer
      & AdamW
      & AdamW \\
    Learning rate
      & $5 \times 10^{-4}$
      & $3 \times 10^{-4}$ \\
    Learning-rate schedule
      & Constant
      & Constant \\
    Warmup ratio
      & $0$
      & $0$ \\
    Batch size (per client)
      & $32$
      & $16$ \\
    Gradient accumulation steps
      & $1$
      & $1$ \\
    Local optimizer steps per round
      & $200$
      & $100$ \\
    Global communication rounds
      & $30$
      & $5$ \\
    Total clients
      & $10$
      & $8$ \\
    Client participation rate
      & $1.0$
      & $1.0$ \\
    Data heterogeneity
      & Dirichlet
      & Pathological non-IID \\
    Dirichlet concentration $\alpha$
      & $0.1,\ 0.02$
      & N/A \\
    LoRA rank (Table~\ref{performance-comparison})
      & $8$
      & $8$ \\
    LoRA ranks (Table~\ref{tab:hetero-rank})
      & N/A
      & $[4,16,32,64,4,16,32,64]$ \\
    LoRA scale $s_k=\alpha_{\mathrm{LoRA},k}/r_k$
      & $2$
      & $2$ \\
    LoRA alpha $\alpha_{\mathrm{LoRA},k}$ (Table~\ref{performance-comparison})
      & $16$
      & $16$ \\
    LoRA alphas $\alpha_{\mathrm{LoRA},k}$ (Table~\ref{tab:hetero-rank})
      & N/A
      & $[8,32,64,128,8,32,64,128]$ \\
    LoRA dropout
      & $0$
      & $0$ \\
    Target modules
      & \shortstack{\texttt{query}, \texttt{key},\\\texttt{value}}
      & \shortstack{\texttt{q\_proj}, \texttt{k\_proj},
                    \texttt{v\_proj}, \texttt{o\_proj},\\
                    \texttt{gate\_proj}, \texttt{up\_proj},
                    \texttt{down\_proj}} \\
    Maximum sequence length
      & $256$
      & $256$ \\
    Training precision
      & FP32
      & BF16 \\
    Random seed
      & $42$
      & $42$ \\
    Energy-retention threshold $\tau$
      & $0.80,\ 0.95$
      & $0.80,\ 0.95$ \\
    Evaluation interval
      & Every $5$ rounds
      & Post-training evaluation \\
    Classifier-head freezing
      & After round $5$
      & N/A \\
    \bottomrule
  \end{tabular}
\end{table*}

\subsection{Hyperparameter Settings}
\label{app:hyperparameters}
Table~\ref{tab:hyperparameters} lists the training and federated learning
hyperparameters for the RoBERTa-base classification experiments and the
LLaMA-3.2-3B reasoning experiments. Unless otherwise noted, we use the same
setting for all methods and datasets to ensure a controlled comparison.

\FloatBarrier

\section{Alternative Orthogonalization Backends}
\label{app:alternative-backends}

Proposition~\ref{prop:core-equivalence} establishes exact coordinate-space
equivalence whenever $\mat{Q}$ is an orthonormal basis whose span contains the
column space of the oriented left factor $\mat{X}$. However, this proposition
does not prescribe how such a basis must be constructed. Although \ours{} uses
reduced QR as its default implementation, other exact orthogonalization methods
can satisfy the same conditions. Specifically, for any $\mat{Q}$ satisfying
\[
  \mat{Q}^{\top}\mat{Q}=\mat{I},
  \qquad
  \operatorname{col}(\mat{X})\subseteq\operatorname{col}(\mat{Q}),
\]
we may define
\[
  \mat{C}=\mat{Q}^{\top}\mat{X},
  \qquad
  \mat{H}=\mat{C}\mat{Y}.
\]
Then $\mat{X}=\mat{Q}\mat{C}$ and
$\mat{X}\mat{Y}=\mat{Q}\mat{H}$. By
Proposition~\ref{prop:core-equivalence}, the compact eigendecomposition
\[
  \mat{H}\mat{H}^{\top}
  =\mat{U}\mat{\Lambda}\mat{U}^{\top}
\]
recovers the nonzero singular spectrum and left singular subspace needed by
\ours{}. Reduced QR is one realization of this construction: if
$\mat{X}=\mat{Q}_{\mathrm{qr}}\mat{R}$, then $\mat{C}=\mat{R}$. The triangular
structure of $\mat{R}$ is incidental; orthonormality and column-space coverage
are the essential conditions.

\paragraph{Polar factorization.}
For full-column-rank $\mat{X}$, the polar decomposition gives
\[
  \begin{aligned}
    \mat{X}&=\mat{Q}_{\mathrm{pol}}\mat{S}, \\
    \mat{Q}_{\mathrm{pol}}
      &=\mat{X}(\mat{X}^{\top}\mat{X})^{-1/2}, \\
    \mat{S}&=(\mat{X}^{\top}\mat{X})^{1/2}.
  \end{aligned}
\]
Here $\mat{Q}_{\mathrm{pol}}$ has orthonormal columns and $\mat{S}$ is symmetric
positive definite. Thus
$\mat{X}\mat{Y}=\mat{Q}_{\mathrm{pol}}(\mat{S}\mat{Y})$, and $\mat{S}$ can
replace the QR factor $\mat{R}$ without changing the subsequent
coordinate-space decomposition. Hence exact \ours{} depends on an orthogonal-basis
factorization, rather than on QR specifically.

\paragraph{Iterative orthogonalization.}
An iterative polar method, such as Newton--Schulz orthogonalization, can instead
produce an approximately orthogonal basis $\widetilde{\mat{Q}}$. The direct drop-in
construction is
\[
  \begin{aligned}
    \widetilde{\mat{C}}&=\widetilde{\mat{Q}}^{\top}\mat{X},
    &\widetilde{\mat{H}}&=\widetilde{\mat{C}}\mat{Y},\\
    \widetilde{\mat{H}}\widetilde{\mat{H}}^{\top}
      &=\widetilde{\mat{U}}\widetilde{\mat{\Lambda}}
        \widetilde{\mat{U}}^{\top}.&&
  \end{aligned}
\]
No triangular structure is required of $\widetilde{\mat{C}}$. With a finite
number of iterations, however,
$\widetilde{\mat{Q}}^{\top}\widetilde{\mat{Q}}$ generally differs from
$\mat{I}$, and therefore
\[
  \widetilde{\mat{Q}}\widetilde{\mat{C}}
  =\widetilde{\mat{Q}}\widetilde{\mat{Q}}^{\top}\mat{X}
  \ne \mat{X}.
\]
This backend consequently introduces an orthogonalization error in addition to
the intended rank-truncation error.

For more accurate reconstruction, one may instead use the least-squares
coordinates
\[
  \widetilde{\mat{C}}
  =(\widetilde{\mat{Q}}^{\top}\widetilde{\mat{Q}})^{-1}
    \widetilde{\mat{Q}}^{\top}\mat{X}.
\]
Then
$\widetilde{\mat{Q}}\widetilde{\mat{C}}
=\mat{P}_{\widetilde{\mat{Q}}}\mat{X}$, which equals $\mat{X}$ whenever
$\widetilde{\mat{Q}}$ spans the same column space as $\mat{X}$. Importantly, exact
reconstruction alone does not make the usual eigendecomposition of
$\widetilde{\mat{H}}\widetilde{\mat{H}}^{\top}$ exact: when
$\widetilde{\mat{Q}}$ is not orthonormal, left multiplication by
$\widetilde{\mat{Q}}$ changes the metric. An exact small coordinate-space variant can
restore the standard form by factoring
\[
  \widetilde{\mat{Q}}^{\top}\widetilde{\mat{Q}}
    =\mat{L}\mat{L}^{\top}, \qquad
  \mat{X}\mat{Y}
    =(\widetilde{\mat{Q}}\mat{L}^{-\top})
     (\mat{L}^{\top}\widetilde{\mat{C}}\mat{Y}),
\]
where $\widetilde{\mat{Q}}\mat{L}^{-\top}$ has orthonormal columns. The
\ours{} coordinate-space decomposition can then be applied to
$\mat{L}^{\top}\widetilde{\mat{C}}\mat{Y}$. This only requires a small
$r\times r$ factorization, but it reduces the advantage of a pure GEMM-based
backend.

\paragraph{Accuracy and performance considerations.}
An iterative method run sufficiently close to convergence can replace thin QR
nearly transparently. In contrast, a fixed-step, low-precision scheme designed
primarily for optimizer updates need not deliver the accuracy required for
recompression. We therefore recommend measuring
\[
  \|\widetilde{\mat{Q}}^{\top}\widetilde{\mat{Q}}-\mat{I}\|_F,
  \qquad
  \frac{\|\mat{X}-\widetilde{\mat{Q}}
    (\widetilde{\mat{Q}}^{\top}\mat{X})\|_F}{\|\mat{X}\|_F},
\]
as well as the gap between the final truncated update and exact QR-based
\ours{}. Whether several Newton--Schulz GEMMs outperform a vendor-optimized
thin QR depends on the tall-skinny dimensions, precision, and GPU. The
algorithmic substitution is therefore valid, while its practical benefit must
be established by an accuracy--latency benchmark on the target hardware.

\end{document}